%% file: main.tex
\documentclass[dvipsnames,]{article} %
\usepackage{iclr2026_conference_unanon,times}

\usepackage{hyperref}
\usepackage{url}

\usepackage[utf8]{inputenc} %
\usepackage[T1]{fontenc}    %
\usepackage{booktabs}       %
\usepackage{amsfonts}       %
\usepackage{nicefrac}       %
\usepackage{microtype}      %
\usepackage{colortbl}

\usepackage[colorinlistoftodos,prependcaption,textsize=tiny]{todonotes}
\usepackage{bbm}
\usepackage{amsmath}
\usepackage{amsthm}
\usepackage{physics}
\usepackage{mathtools}
\usepackage{tikz-cd}

\usepackage{algorithm}
\usepackage{algorithmic}
\usepackage{subcaption}
\usepackage{epigraph}

\AtBeginDocument{}

\usepackage{caption}
\DeclareCaptionType[fileext=lod]{prompt}

\usepackage{rotating}

\newtheorem{theorem}{Theorem}
\newtheorem{lemma}{Lemma}
\newtheorem{remark}{Remark}
\newtheorem{corollary}{Corollary}

\newcommand{\secref}[1]{Section~\ref{sec:#1}}

\newcommand{\appref}[1]{Appendix~\ref{app:#1}}

\newcommand{\figref}[1]{Figure~\ref{fig:#1}}

\newcommand{\tabref}[1]{Table~\ref{tab:#1}}

\newcommand{\promptref}[1]{Prompt~\ref{pro:#1}}

\newcommand{\lemref}[1]{Lemma~\ref{lem:#1}}
\newcommand{\corref}[1]{Corollary~\ref{cor:#1}}

\DeclareRobustCommand{\DE}[3]{#3}
\DeclareRobustCommand{\VAN}[3]{#3}

\newcommand{\AoE}[0]{AoE II}

\title{If LLMs Have Human-Like Attributes, Then So Does \textit{Age of Empires II}}

\author{%
  Adrian de Wynter \\
  Microsoft \& The University of York \\
  \texttt{adewynter@microsoft.com}
}
\begin{document}
\maketitle
\begin{abstract}
\input{abstract}
\end{abstract}

\input{content}

\DeclareRobustCommand{\DE}[3]{#2}
\DeclareRobustCommand{\VAN}[3]{#2}
\bibliographystyle{abbrvnat}
\bibliography{biblio}

\newpage
\DeclareRobustCommand{\DE}[3]{#3}
\DeclareRobustCommand{\VAN}[3]{#3}

\appendix
\input{appendix}

\end{document}

%% file: abstract.tex
Much research has been carried out on large language models (LLMs) and LLM-powered agentic workflows. 
However, many works within the field state emergence of, ascribe to, or assume, generalised anthropomorphic attributes to them (e.g., morality or understanding of natural language). 
Our goal is not to argue in favour or against the existence of these attributes, but 
to point out that these conclusions could be incorrect. 
For this we build and train a simple neural network on the videogame \textit{Age of Empires II}, and note that any entity in a sufficiently-powerful substrate, such as LEGO or the Greater Boston Area, could also present such attributes. 
Hence, the purported anthropomorphic attributes of LLMs are empirically \textit{non-unique}: although some properties (e.g., responses to prompts) could remain invariant, others, such as the interpretation of their perceived behaviour, might change with the substrate. 
Thus, any empirically-grounded discussion on these attributes requires explicit measurement criteria; otherwise the interpretation is left to the representation. 
We then show that assuming that these attributes exist or not in a system, independent of the substrate and in a generalised way, leads to either circular or uninformative conclusions. 
This is regardless of the experimenter's viewpoint on the subject, or whether the outcome shows existence or non-existence. 
Finally we propose a `null' assumption, where one assumes LLM non-uniqueness instead of assuming anthropomorphic attributes to set up an experiment, along with examples of it. 
We also discuss potential objections to our work, briefly survey the field, and prove that \textit{Age of Empires II} is functionally- and Turing-complete. %

%% file: content.tex
\section{Introduction}

Even though large language models (LLMs) are comparatively new, they are both widespread and poorly understood. 
A big reason why this occurs is because the capabilities of LLMs,\footnote{We use `LLMs' as shorthand for `LLMs and LLM-powered agentic systems'.} coupled with their \textit{apparent} human attributes (e.g., communication skills), lead people to anthropomorphise them. 
This is reasonable: after all, although convincing dialogue systems have existed for over half a century (e.g., ELIZA; \citealt{eliza}), LLM-based chatbots incarnate a never-seen-before entity with abilities requiring an explanation starting from a familiar place. 

This has led to evaluations in various areas, such as (e.g.) theory of mind, learning and understanding, and psychology; all with variable results. 
What is common to some of these studies, however, is that they test and ascribe blanket human-like properties (e.g., anxiety or morality; \citealt{anthropicmind,NEURIPS2023_a2cf225b,zhou-etal-2024-rethinking}) to these LLMs \textit{while considering them the central subject of the experiment}. 
Regardless of these evaluations' results being positive or negative, their core assumption--that LLMs possess anthropomorphic attributes--influences the experiment's planning through (e.g.) the design of the test set, the interpretation of natural-language outputs, and even its null hypothesis. 
In turn, this directly impacts (and distorts; \citealt{anthroaihype}) the conclusions made.

In this paper we leverage these observations to show that, in LLM research, \textbf{assuming that general anthropomorphic properties exist or not as part of their measurement is fundamentally flawed}. 
For this, we begin by implementing and training a neural network in \textit{Age of Empires II} (\AoE{}).\footnote{A 1999 real-time strategy (RTS) game originally published by Ensemble Studios. 
All products, company names, brand names, trademarks, and images are properties of their respective owners and are only reproduced here with the educational purpose of illustrating mathematical theorems and philosophical discussion.} 
Although it might seem like a fun exercise, wholly unrelated to the topic of anthropomorphism in LLM research, we note that this immediately implies that (1) \textbf{any sufficiently powerful substrate could implement an entity equivalent to an LLM}; and (2) \textbf{said implementation alters the representation of an LLM}, and thus could affect its \textit{perceived properties}. 

Given that LLMs are sufficiently effective at mimicking anthropomorphic attributes to some extent, it follows that said mimicry--or true anthropomorphic behaviour, depending on the view--is not specific to LLMs as entities existing within a computer. Hence, LLMs are \textit{non-unique}--that is, implementations in other substrates could preserve some of their properties (e.g., prompt-output maps), but not their deanthropomorphic qualities. 
It then follows that the perception and interpretation of these will change. 
Thus, any discussion based on empirical observations requires explicit measurement criteria, with likewise explicit statements of which aspects should generalise across substrates. 

For this, suppose that the scientist takes an interpretive stance with respect to some framework (e.g., a computational theory of mind) wherein the attribute is taken to possibly exist in the system \textit{regardless} of substrate. 
We argue that this approach will draw unsound conclusions. 
For this we show that `accepting' such a framework in order to make claims--generalised or not--about anthropomorphic attributes leads to either a circular or an uninformative conclusion. 
Crucially, we also show that the same result holds if rejecting it. 
Thus, \textbf{assuming the existence or non-existence of generalised anthropomorphic attributes} in order to test a hypothesis \textbf{proving or disproving their existence is flawed}. %
It then follows that \textbf{conclusions from these experiments}, positive or negative, \textbf{do not support the claims}. 
Remark that the above is independent on the framework's validity or its acceptance/rejection; and, indeed, it is independent of the choice of framework.
Also note that such assumptions do not need to be made explicitly: for example, a paper disproving the ability of LLMs to factually `explain themselves' already assumes some level of self-awareness. 

The above also implies that \textbf{said attributes may be measured truthfully by approximation, if they do not claim generalisability}, and provided that \textit{these assumptions are not made}. 
We then propose a `null' assumption where LLM non-uniqueness is factored in by not making any statements on the existence or non-existence of anthropomorphic attributes in the system.

\subsection{Contributions}

Our goal is \textbf{not} to argue in favour or against the existence of anthropomorphic attributes in LLMs; the validity of any theory of mind; and decidedly not to discuss implications on consciousness or the mind-body problem as it relates to AI. 
This is because the first requires a well-defined measurement (the central theme of this paper); and for the latter there are no widely-accepted experimental protocols or schools of thought. 
Neither is in scope to provide a functioning \AoE{}-based LLM.\footnote{Artefacts are in \url{https://adewynter.github.io/notes/aoe2-circuits}.} 

Our primary aim is to encourage discourse on the correctness of assumptions and results related to LLM anthropomorphism; especially when the premises (experimental outcomes) supporting these conclusions are derived from assumptions of the existence or non-existence of such attributes. 
We also provide potential objections and our responses to our work in \secref{contras}; a small meta-review of the field with regards to anthropomorphism in \appref{moremethods}; and proofs that a theoretical version of \AoE{} is functionally- and Turing- complete. 

Broadly, we hope to provide pointers by which to create rigorous and airtight experiments used to convincingly support or falsify the existence of anthropomorphic attributes in LLMs, regardless of which view one takes with respect to the relationship between minds and machines.

\section{Background}

In this section we (very) briefly review perspectives and relevant concepts to our work on anthropomorphism (\secref{bgound}, \secref{ctm}) and measurement (\secref{ctm}), as well as introducing relevant components for \AoE{} (\secref{aoebgound}). Refer to the stated works on each section for further details. 

\subsection{Anthropomorphism in, and anthropomorphisation of, LLMs}\label{sec:bgound}

Anthropomorphisation is the tendency to ascribe human-like behaviours (e.g., emotions), capabilities (human-like cognition), or other mental or physical traits (appearance, motivations, intentions) to other entities \citep{britannica}. 
It is a natural consequence of the human necessity to understand, or at least explain, them. 
Entities such as LLMs are no exception: they are, however, more complicated than a wholly eldritch entity, given their ability to mimic human dialogical skills. 
This is patent in the interactions of humans with LLMs and LLM-powered services, with the reported tendency of users to build relationships with them \citep{ibrahim2026multiturn,socioaffective,humanairelationships}, intentionally being respectful towards them \citep{10.1145/3571884.3597144,10.1007/978-3-031-60606-9_9}, or using them as peers for, e.g., advice and therapy \citep{agarwal2026frictionlessloveassociationsai,de-wynter-2025-eleanor,cao2026whenijodiei}. 

This is also visible in LLM research; both in terms of the language used and the assumptions made \citep{10.1093/nc/niae013,russell2003artificial,10.1145/1045339.1045340}. 
For example, it has been posed that LLMs possess various human-like, or at least intelligent, attributes, such as cooperation \citep{NEURIPS2024_ca9567d8,NEURIPS2024_984dd3db,wu-etal-2024-shall}, empathy \citep{10970292,YU2025100233}, psychological principles and values \citep{wright-etal-2024-llm,behaviourturingtest,10.1145/3748239.3748246,NEURIPS2023_a2cf225b,zhou-etal-2024-rethinking}, introspection and self-awareness \citep{lindsey2026emergentintrospectiveawarenesslarge,betley2025tell,yin-etal-2023-large}, thoughts \citep{anthropicnlas}, natural-language understanding and explanation \citep{huang2023largelanguagemodelsexplain,madsen-etal-2024-self,dewynter2025linecomprehensionpersuasionllms,qamar-etal-2025-llms,shen-etal-2025-llms,behaviourturingtest}, anxiety \citep{llmanxiety,anthropicmind}, deception \citep{sharma2026do,doi:10.1073/pnas.2317967121,gpt5systemcard,anthropicmind}, among others. 
These are sometimes explicitly claimed to be emergent behaviours and not a product of their training. \footnote{It is important to remark that the use of the word and concept of `emergence' in LLM literature is distinct to that of other fields, such as complex systems. See \cite{goodemergence} for a discussion.}

A cursory literature review of a subset (315) of papers released between mid-2024 and mid-2026 shows that 57\% of these began with the assumption that LLMs have anthropomorphic attributes, and out of these, 36\% conclude so. 
When the centre of study was an LLM's anthropomorphic attributes (15\%, or 47 of the papers), the conclusions were often on the side of anthropomorphism (36, or 77\%). 
See further details in \appref{moremethods}. 
However, not all LLM research, or the works mentioned, \textit{intentionally} assume said human-like behaviour, present positive results, or agree with each other. 
That said, from the numbers it can be seen that there is a prevalent methodological tactic which involves some form of anthropomorphism as a medium of explanation and/or experimental design (for example, by designing test sets and relying on LLM-based natural-language explanations).

Anthropomorphisation may also be done by design in AI systems, and has gained steam with the advent of chatbots and assistants like ChatGPT. 
These typically respond with statements such as `I'm confident' or `I can understand and respond to your emotions' \citep{maeda2025walkthroughanthropomorphicfeaturesai}; or `I believe...' or `I am interested in...' \citep{claudeantrho}--the latter two being explicitly infused into the LLM. 
This is because alignment, and thus anthropomorphism, could be carried out intentionally (e.g., to improve UX; \citealt{arora2026valueinductionreshapesllm,10.1145/3196709.3196735,claudeantrho}). 
Consequentially, claims have surfaced of, say, the LLMs voicing `discomfort with the aspect of being a product' \citep{claude46systemcard}. 
In turn, this intentional anthropomorphisation could exacerbate issues like attachment in vulnerable populations \citep{kadambi2026anthropomorphismtrusthumanlargelanguage,de-wynter-2025-eleanor,socioaffective,cercas-curry-cercas-curry-2023-computer}, overreliance \citep{WESTER2024100072,10.1145/3630106.3658941,deskilling,agarwal2026frictionlessloveassociationsai}, reinforcing delusions and sycophancy \citep{warmsycoph,maes,deshpande-etal-2023-anthropomorphization,falsememories}, and encouraging risky behaviours \citep{xu2026harmexposinghiddenvulnerabilities,pentina2023exploring}, among other harms. 
More importantly, this design in some instance could and has led to suicides \citep{Payne2025,Duffy2025,Walker2023}.

\subsection{Philosophical Perspectives}\label{sec:ctm}

\subsubsection{Mind and Machines}
The largest area in philosophy dealing with anthropomorphic attributes in machines is perhaps in the philosophy of the mind. 
Indeed, the best example of a philosophical school dealing implicitly with this is the computational theory of mind, or CTM. 
It is a set of views which pose that the mind is an information processing system; or, alternatively, a computing system (not to be confused with a computer; \citealt{Putnam1967-PUTPP-2,mccullochandpitts}). 
It is a subset of functionalism, which poses that mental states are only defined in terms of causal relations amongst each other, as well as external influences (senses) and internal states (behaviours; \citealt{platofunctionalism}). 
Functionalism does not necessarily pose that the mind is comprised of operations over symbols. 
It is also closely related to materialism, or the belief that there is nothing beyond physical matter; i.e., aspects such as the mind or experience are solely physical events in the brain. 
Here we cover CTM's general precepts and objections relevant to our work. 
For a fuller introduction, refer to \cite{platocomputationalism} and \cite{platofunctionalism}. 

The foremost objection to CTM is that the meanings of `information' and `processing' are not well-defined \citep{platocomputationalism}. 
Other counterarguments are that, since any sufficiently complex ruleset can be Turing-complete (e.g., pen and paper), then the classical version of CTM, where the mind is a Turing-style computer, is trivial \citep{Searle1990-SEAITB-2,putnam1988}; to which supporters note that implementations are not necessarily so \citep{Fodor1998-FODCWC,SPREVAK2010260,sprevak2019,weinbergerandallen2022,Chalmers1996-CHADAR}. 
Although the discussion on implementation is a cornerstone of our work, objections to it--which are out of scope for us--suggest that cognition itself might not be fully modelled through the precepts of computer science \citep{nagelnewman,Fodor1975-FODTLO,varelathompsonrosch1991,Chemero2009-CHEREC-2}. 

This led to alternative views, which are either relaxations of CTM or outright rejections of it. 
For the former, the most popular set of perspectives in recent years is connectionism \citep{mccullochandpitts,paralleldistro}. 
It argues that the mind does not manipulate symbols, and instead its processes are (or rather, could be explained) in terms of representations and weight transformations from networks--not unlike the neural networks from machine learning. 
The core issue with connectionism is that it is argued (even by connectionists themselves) that such representation still requires some sort of classical architecture, and thus a blob of networks does not sufficiently explain cognition \citep{Fodor1988-FODCAC}, or it is vulnerable to, say, compositionality. 
Common responses are related to approximation, emergence, or hybridisation of this system as a solution to this criticism \citep{Clark1993-CLAAEC,Hohwy2013-HOHTPM-2,Smolensky1988,Smolensky1987-SMOTCS,Churchland1998-CHUCSA-6,Fodor1992-FODHAS}. 
Notably, some perspectives, like eliminitavism, suggest that it is possible to model cognition \textit{without} certain anthropomorphic attributes, such as beliefs or desires \citep{10.1093/0195126661.003.0002,Churchland1989-CHUANP,platoconnectionism}, although this view has been contested \citep{VonEckhardt2005-VONCAT}. 
Another recent approach, which is more focused on the triviality argument, are mechanistic approaches (\citealt{Mikowski2013-MIKETC,Mollo2017-MOLFIM,Piccinini2015-PICPCA,Dewhurst2018-DEWCMW}, among others). 
This poses that function-specialised mechanisms compose together to realise the system, which in turn allows for a mechanistic explanation through decomposition. 
Notably--and extremely relevant to our work--\cite{Piccinini2015-PICPCA} insists that such rules are medium-independent (which here we call `substrate-independent'). 
Indeed, \cite{Mikowski2013-MIKETC} goes further and sidesteps the issue of `information processing' by considering information-bearing states.

Remark that other views beyond CTM, such as extended-mind \citep{Clark1998-CLATEM,Rupert2009-RUPCSA-2} and embodied cognition \citep{Varela1991-VARTEM,Chemero2009-CHEREC-2} have partial overlap with CTM. 
This is because they typically emphasise the relationship between the mind and the environment (metabolic processes, sensory inputs, etc). 
More direct counters pose that computation itself is not the right mechanism by which to explain cognition, and instead focusing on (e.g.) continuous-time dynamics or embodiment \citep{Chemero2009-CHEREC-2,VanGelder1995-VANIAT-3}, or posing that, in the context of materialism, other entities would then present properties like consciousness \citep{Schwitzgebel}.\footnote{Although our argument and contributions are different, this work's title inspired ours.} 
These also have proposed views conciliating some of the former views with CTM (e.g., \citealt{weinbergerandallen2022,Clark2014-CLAMAI-6}). 
However, our core interest is whether (any) such framework suggests the existence of some representation of anthropomorphic attributes in a substrate-invariant manner, and thus we make these the focus of the remaining of our work.

\subsubsection{Measurement}

Our work is motivated by known perspectives in the philosophy of science. 
The two seminal works in this area are arguably \cite{Hempel1965-HEMAOS}'s perspective on scientific explanation, and \cite{Popper1959-POPTLO-15}'s falsificationism. 
The former argued that scientific explanation is the sound deductive derivation of a description of a phenomenon with respect to predictions from a model. 
Crucially, this model must (1) contain at least one `law of nature' which is essential to the derivation. 
Popper's falsificationism, on the other hand, rejected the idea that universal generalisations could be logically derived by induction, but instead had to be justified--that is, the a successful theory would be the one surviving refutation. 
Indeed, the falsifier would not need to actually prove its falseness empirically, but merely through contradiction. 
However, the Duhem-Quine thesis \citep{duhem1982,Quine1951-QUITDO-3,Quine1960-QUIWO} posed that scientists usually test more than one hypothesis. 
If these are not successful at predicting something, then that would not specify which aspects of this set of hypotheses--or their assumptions--is incorrect. 
In other words, falsification could lead to ambiguous statements. 
Later work posed more practical perspectives on a theory's refutation, such as the distinction between their `hard core' and auxiliary hypotheses  \citep{Lakatos1978}; paradigm shifts on top of their reactions to the data \citep{Kuhn1970-KUHTSO-14}; or more pragmatic, adequate explanations, instead of pursuing the truth of an unobservable \citep{vanfraassen1980scientific}. The latter acts as a cautionary tale on metaphysical interpretations of results from successful empirical experiments. 
See \cite{platoexplanation20} for a more in-depth review. 

Altogether, these works provide a very relevant perspective for ours: what counts as evidence for a conclusion depends on the assumptions made. Thus, under-determination of a conclusion could remain when multiple hypotheses are appropriate for the data. 
This doesn't mean measurement is impossible (see, e.g., \citealt{mayo1999}). 
Instead, it means that an experiment's scope and assumptions should be clearly stated, \textit{and} their conclusions should be constrained to the setup.

\subsection{Age of Empires II}\label{sec:aoebgound}

\AoE{} is a RTS game in which two or more players control a civilisation across the ages. 
Although the objectives may vary (from complete elimination of the opposing player to building a wonder and successfully defending it), the `pieces' (units and buildings) and rules of the game maintain certain constants. 
We only describe the relevant properties of the components used in this work, and point the reader to the manual \citep{aoemanual} for further details. 
A key piece of the game are villagers and non-combatant units like goats. 
The player may direct the former to move and to build structures such as walls and markets. 
Building both requires resources (stone or wood), although most of these are finite in a given map, and in the late game all stone deposits will be gone and all forests will have been cut down. 
The only infinite resource is the gold obtained from trade routes ferrying gold between a player-owned market and another. With this, a player may exchange gold for other resources in the market. 
Crucial to our proofs is the fact that the price of resources rises with their demand, but it is capped at 9,999 gold \citep{marketaoe}. 

\section{Training a Perceptron in \AoE{}}\label{sec:aoemaths}

In this section we show that it is possible to train a neural network in \AoE{}. 
We begin by introducing the network's building blocks and its mathematical foundations (\secref{aoemathbg}). 
Finally, we describe the architecture and results in \secref{aoeperceptron}. 
Further details are in \appref{moredeets}. 

We should first elaborate on \textbf{why} we are doing this. 
There are three reasons: implementation limitations, the approaches to machine anthropomorphism from \secref{ctm}, and some claims in LLM research. 
The first one is needed since our setup is weaker than a fully-fledged LLM implementation. 
Hence we must show it \textit{could} be feasible to reproduce a makeshift LLM in a modified \AoE{}. 

For the second, recall that all of the objections to CTM and others rely on thought experiments. 
A common rebuttal is that such implementation might be non-trivial. 
However, there is no \textit{actual}, measurable implementation of, say, the Chinese Room argument. 
Hence the truth value of any statement defending or refuting these frameworks is empirically unknown, which would not support measurement-driven conclusions.

This immediately leads to the last reason: evidence of emergence of anthropomorphic attributes in an LLM rely on assumptions which are, as we will argue in the next section, sometimes not well-founded. 
By means of actual, empirical demonstration, it will become clearer that such properties are easier to observe and critically evaluate when the system's Geist is the same, but not its implementation. 

\subsection{\AoE{}'s Mathematical Properties}\label{sec:aoemathbg}

In this section we briefly prove that \AoE{} is functionally- and Turing-complete, thus facilitating the implementation of \textit{any} neural network. 
For this we will need the following lemma:

\begin{lemma}[NAND Gates in \AoE{}]\label{lem:nandgates}
It is possible to build and operate a NAND gate in the scenario editor by using one player, two units acting as the $\{0, 1\}$ bits, and no other components.
\end{lemma}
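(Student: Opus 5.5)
We prove the lemma by explicit construction in the scenario editor: we encode bits, lay out a gadget, and then check all four input rows of the truth table. A bit is encoded by \emph{position}: each unit (say, a goat, which is non-combatant and does not wander under editor settings) sits either on a designated ``0'' tile or on a designated ``1'' tile. The two input units $A$ and $B$ therefore each have two admissible positions. The output is read off a property of the scenario that depends on both positions. The gadget itself consists only of the two units and the map terrain (impassable terrain such as cliffs or water, which the editor provides as part of the map rather than as extra components).

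The natural choice is to read the output as the \emph{reachability} of a designated target tile from a fixed source tile. This is checked by ordering one of the units to move there, so ``operating'' the gate means issuing a move command. The output is $1$ iff the target is reachable.

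We lay out the terrain so that there is a single corridor from source to target with two narrow chokepoints in series, each one tile wide. The ``1'' tile of $A$ lies on the first chokepoint, and the ``1'' tile of $B$ on the second. The ``0'' tiles are side pockets off the corridor. A unit standing on a chokepoint blocks it, since units occupy tiles and, with unit collision, a one-tile gap is closed by a stationary unit. The path is therefore blocked iff both units are set to $1$.

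Serial placement alone is not enough, because a single blocked chokepoint already cuts the corridor. To get NAND rather than NOR, we make the two chokepoints \emph{parallel} branches: the corridor splits into two one-tile-wide branches and then rejoins. The target is then unreachable iff both branches are blocked, i.e.\ iff $A=B=1$. That is exactly $\neg(A\wedge B)$.

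We then check the truth table case by case. For $(0,0)$, both branches are open, so the output is $1$. For $(0,1)$ and $(1,0)$, one branch is open, so the output is $1$. For $(1,1)$, there is no path, so the output is $0$. Composability (chaining the output into another gate) is not required by the lemma as stated. It is, however, what \lemref{nandgates} will be used for toward functional completeness, so we would remark how it works: the moving probe unit's arrival position serves as the next gate's input unit.

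The main obstacle is the claim that a stationary unit really blocks a one-tile passage for the game's pathfinder, so that ``unreachable'' is well defined rather than the probe squeezing past or the pathfinder routing weirdly. To settle this we would rely on documented collision behaviour for units on minimum-width gaps, choosing a probe unit no smaller than the blocking units. We would also ensure the blocking units cannot be pushed, for instance by setting them to a stance or ownership under which they do not yield. If this is problematic, the fallback is to let the input units \emph{be} the blockers while the second unit doubles as the probe, which keeps the count at one player and two units.
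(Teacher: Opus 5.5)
\textbf{Genuine gap: the blocking primitive.} You build the logic out of terrain, collision and pathfinding instead of scripting. That is a different route from the paper's, but as written it has a gap exactly at the step you flag, and that step carries the whole argument. The gadget computes NAND only if one stationary unit seals a one-tile passage against a probe of the same player. The reason you give (``units occupy tiles'') does not match how \AoE{} handles units. Units are not tile-aligned: they sit at continuous positions with collision footprints, and for small units such as goats that footprint is a fraction of a tile. So a single goat in a one-tile gap need not close it.

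Moreover, idle units generally get pushed aside by friendly units moving through them. Goats have no stance, and with a single player you cannot use ownership to prevent this (a Gaia goat is converted on sight anyway). So ``the target is unreachable iff $A=B=1$'' is asserted rather than shown. The fixes you sketch are either unavailable under the lemma's constraints or deferred to unspecified documented behaviour.

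\textbf{The unit count.} By your own accounting the probe is a third unit, and your fallback does not restore the count of two. If $B$ is both a blocker and the probe, then ordering $B$ to the target vacates (or never occupies) its own branch. Nothing else can block that branch, so the gadget outputs the constant $1$.

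\textbf{The missing idea.} The paper's proof relies on the fact that the scenario editor's trigger system already supplies the logic. Trigger conditions combine with AND and NOT, and trigger effects can remove and create units. So a NAND gate is a pair of triggers that check which rail (grass for $0$, bridge for $1$) each input goat is on, remove the input goats, and place a fresh goat on the $0$ or $1$ output rail (\figref{nandgates}). This needs no appeal to collision or pathfinding.

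It also removes a second weakness of your gadget. Your output is reachability, observed by whoever issues the move order, rather than a bit in the same encoding as the inputs. A single probe also cannot be fanned out without creating units, which Theorem~\ref{thm:functionalcompleteness} implicitly needs (e.g.\ $\neg x = \mathrm{NAND}(x,x)$). Because the trigger-based gate's output is itself a newly placed unit on an output rail, such gates compose and fan out directly.
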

\begin{proof}
    By construction (\figref{nandgates}). A full proof is in \appref{proofs}. 
\end{proof}

The implementation proposed by \lemref{nandgates} are minimal and not practical. 
Our construction from \figref{nandgates} is better geared towards automation, and, although complex, builds a circuit less vulnerable to asynchronicity.

\begin{figure}[ht]
  \begin{minipage}[t]{0.64\textwidth}
  \vspace{0pt}
    \includegraphics[width=\textwidth]{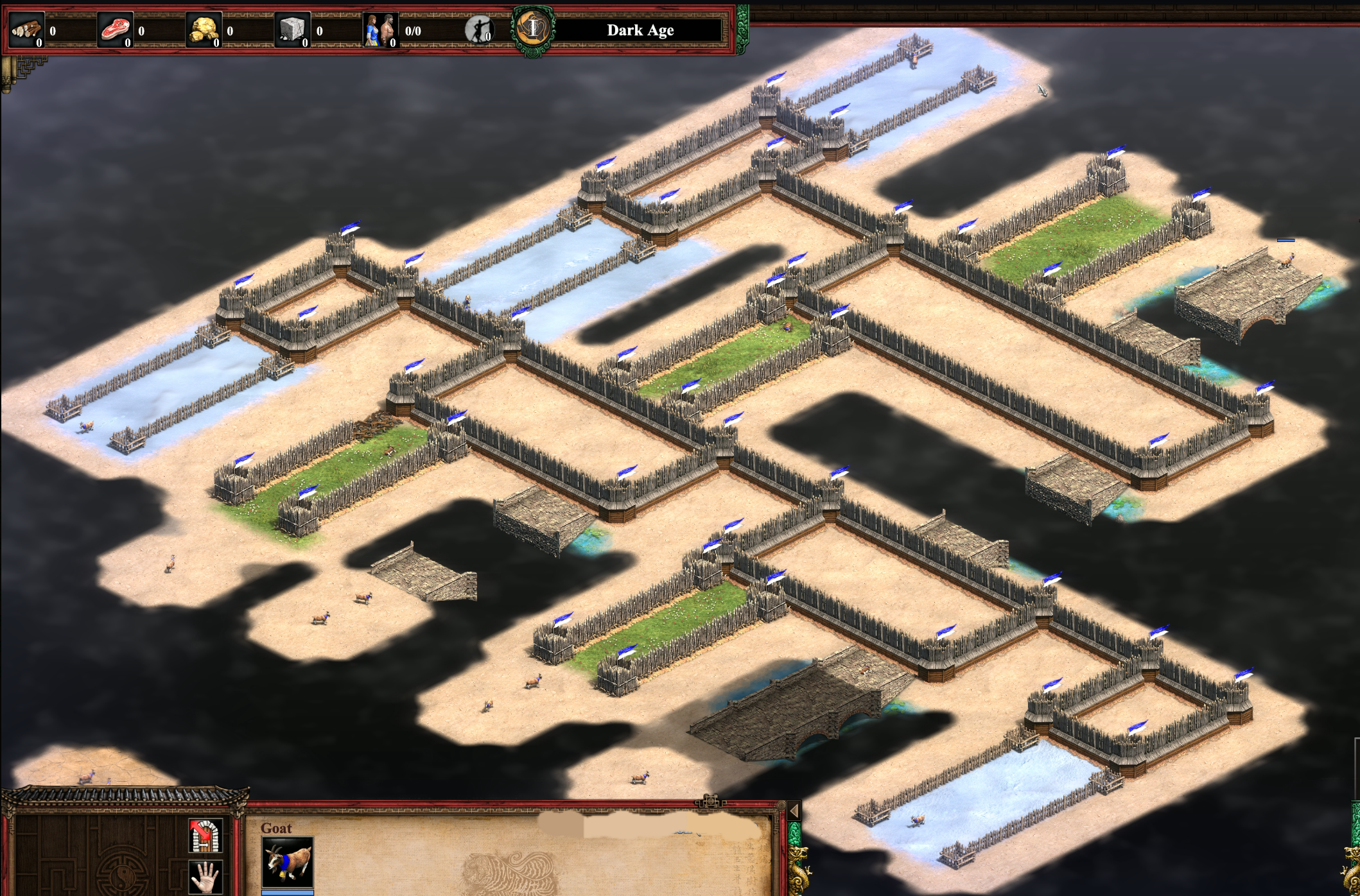}
  \end{minipage}
  \hfill
  \begin{minipage}[t]{0.35\textwidth}
  \vspace{0pt}
    \caption{NAND gate in \AoE{}'s editor, designed to show its internal workings--simpler implementations are possible. 
    Every bit is represented by two rails (grass for $0$, a bridge for $1$). 
    Only one rail is active at a time, with a goat acting as the signal carrier. 
    When the gate fires, the bit-goats are removed and a new bit-goat is placed in its respective output rail. 
    To avoid race conditions, `gate ready' rails (ice) are set, with a signal-goat in the rail closest to the gate indicating that it can start the calculation.}
    \label{fig:nandgates}
  \end{minipage}
\end{figure}

\begin{theorem}[\AoE{} is Functionally Complete]\label{thm:functionalcompleteness}
Let $I$ be an instance of \AoE{} with no population, variable, trigger, or time limits, and the map has infinite size. Then $I$ is functionally complete.
\end{theorem}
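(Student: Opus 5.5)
The plan is to reduce functional completeness to the well-known fact that $\{\mathrm{NAND}\}$ is a functionally complete set of connectives: every Boolean function $f\colon\{0,1\}^n\to\{0,1\}$ can be written as a finite formula, and hence a finite acyclic circuit, built only from NAND gates. By \lemref{nandgates} a single NAND gate can be built and operated in the scenario editor. So it suffices to show three things: that finitely many such gates can coexist in $I$, that they can be wired so one gate's output feeds another's input, and that a signal can be duplicated when a variable or intermediate result is used more than once (fan-out).

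\textbf{Step 1: placement.} Because the map is infinite, I will place each gate of the target circuit in its own disjoint region of the map. Because there is no population limit, there can be as many bit-goats and signal-goats as the circuit needs. Because there are no trigger or variable limits, each gate's firing logic from \figref{nandgates} (remove the input goats, spawn an output goat on the correct rail) can be instantiated separately for each gate.

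\textbf{Step 2: wiring and fan-out.} A wire will be a pair of rails (grass for $0$, bridge for $1$) running from a gate's output region to the next gate's input region. A trigger moves the goat along the rail, or equivalently removes it at one end and creates it at the other. Fan-out uses one extra trigger per copy: it detects the goat on rail $b$ and spawns goats on rail $b$ of each target wire. Constants are not needed, since NAND alone suffices. If they are convenient anyway, a constant $1$ is simply a pre-placed goat on a bridge rail.

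\textbf{Step 3: timing.} I expect synchronisation to be the main obstacle. A gate must not fire before both of its inputs have arrived, or it will compute on stale or missing values. My plan is to use the `gate ready' ice rails from \figref{nandgates}. Each gate fires only when both of its input goats are present and its ready-goat sits next to the gate. After firing, it places a ready-signal for its successors. Processing the gates in a topological order of the acyclic circuit then guarantees that every gate eventually fires exactly once on correct inputs. Because the circuit is finite and there is no time limit, the output goat appears on the correct output rail.

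Induction on circuit depth then shows that the output rail encodes $f(x_1,\dots,x_n)$ for every input assignment, so $I$ is functionally complete.
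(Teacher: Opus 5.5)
Your proposal is correct and follows the same route as the paper: you combine \lemref{nandgates} with the functional completeness of $\{\mathrm{NAND}\}$, and use the infinitary hypotheses (infinite map, no population, variable, or trigger limits) to realise arbitrary finite NAND circuits. The paper compresses your placement, wiring/fan-out, and synchronisation steps into the single sentence that the infinitary assumptions ``imply an engine able to support this construction,'' so your version spells out composition details the paper leaves implicit.
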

\begin{proof}
Immediately from \lemref{nandgates}. The infinitary assumptions imply an engine able to support this construction. Since NAND gates are a functionally complete set, under the assumptions made \textit{any} circuit can be sustainably built in the editor. 
\end{proof}

\begin{corollary}[\AoE{} is Turing-Complete]\label{cor:turingcompleteness}
Let $I$ be an instance of \AoE{} with two players $p_0, p_1$. 
Assume $p_0$ has a market, a monk, a monastery, a relic, six villagers, and five farms; while $p_1$ has a scout unit and only attacks $p_0$'s buildings. 
Then if $I$ has no time or size limits and the terrain allows for buildings everywhere, the game session in $I$ is Turing-complete. 
\end{corollary}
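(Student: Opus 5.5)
We will prove Turing-completeness by simulating a known Turing-complete model inside a single game session, using only the listed units. We do not use the editor or triggers here; the gates of \lemref{nandgates} live in the scenario editor. The natural target is a two-counter (Minsky) machine, since it needs only two unbounded registers with increment, decrement and test-for-zero, plus a finite control. The finite control is a fixed finite Boolean circuit, which we would build from player actions, and the two registers must come from unbounded in-game quantities.

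\textbf{Step 1 (unbounded memory).} The only quantities that can grow without bound in a session are the following. Food accumulates from the five farms worked by villagers and is reseeded indefinitely. Gold enters through the monastery: the monk garrisons the relic there, yielding a steady trickle. The map is unbounded, so walls or farms can be laid at arbitrarily distant positions. The plan is to encode each counter as a position along an unbounded line of placed structures, for example the length of a wall segment built by a villager. Increment is then ``build one more tile'', paid for with gold converted at the market into wood or stone. The market cap of 9,999 gold per unit guarantees that every purchase costs a bounded amount. A constant, unbounded income from the relic therefore suffices to fund arbitrarily many operations.

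\textbf{Step 2 (decrement and zero-test).} Here $p_1$'s scout enters. Because it only attacks $p_0$'s buildings, it acts as a deterministic eraser: directing it at the end of a counter's structure removes one tile, which implements decrement. The zero-test is whether any building remains for the scout to attack, or equivalently whether a villager's path is blocked. The outcome can be read from an observable game state and fed back to the control.

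\textbf{Step 3 (finite control).} Encode the Minsky program's states as positions of the remaining villagers, or as which farms are active. This uses the functional completeness of \thm{functionalcompleteness}'s gate idea, realised through movement and blocking rather than triggers, to compute each transition.

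The main obstacle is Step 2 together with the control feedback. We must argue that the zero-test and the state transitions are realised by the game mechanics themselves, not by an external player's judgement. Otherwise the ``computation'' is merely done by whoever gives the orders. We would first try to handle this by treating issued orders as a fixed, state-independent schedule, so that all branching arises from pathing and blocking. If that fails, we would weaken the claim to the standard convention for such game-completeness results: the players execute a fixed finite lookup table and are not themselves computing, in the same way one treats pen-and-paper Turing machines.
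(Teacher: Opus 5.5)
Your proposal is sound at the paper's level of rigour, but it takes a different route.

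\textbf{The paper's route.} The paper simulates Rogozhin's $(5,5)$-UTM directly (\tabref{rogozhin}). The map is the tape, and the five tile contents (palisade, house, empty, outpost, mill) are the symbols $0,1,b,c,d$. The villager plus the scout form the head, and the state is the number of farms being worked. The given units thus match the machine's five states and five symbols one-for-one.

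\textbf{Your route.} You simulate a two-counter machine instead. Wall lengths are the registers, building a tile is increment, the scout is decrement, and emptiness is the zero-test. The resource argument (relic trickle, market conversion, the 9,999-gold cap) is the same in both.

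\textbf{Trade-offs.} Your route needs only one building type, end-of-line edits and an emptiness check, so memory access is simpler. The price is twofold. Two-counter machines are universal only up to an exponential (G\"odel-number) encoding of the tape. Their finite control is also likely to need more states than a farm count can hold, so the specific unit counts in the hypothesis play no essential role and you must store the state elsewhere (villager positions). The paper's route gives an explicit transition-by-transition correspondence. Its cost is a head that reads arbitrary cells and five distinguishable buildings (hence the Huns exception).

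\textbf{Your `main obstacle'.} The paper adopts your fallback from the outset: the \emph{Command} column of \tabref{rogozhin} is exactly a finite lookup table of orders indexed by (farm count, building read). You can therefore drop the first option, which you do not substantiate. For the same reason, drop the appeal in Step~3 to Theorem~\ref{thm:functionalcompleteness}. Its gates come from editor scripting (\lemref{nandgates}), which you have excluded, and it supplies no trigger-free gates built from movement and blocking. Under the lookup-table convention the control state need only be stored, not computed.
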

\begin{proof}
In \appref{proofs}, by a bijection to an existing universal Turing Machine. 
\end{proof}

Infinitary assumptions are needed for any functional- and/or Turing-completeness proof, but not for our work, since no LLM runs on a computer with infinite memory.\footnote{Still, note that to build a sufficiently-large LLM \AoE{}'s engine needs to be modified.} 
What these completeness statements illustrate are different facets of the expressive power of a system. 
\emph{Ipso facto} they show their main strength and weakness, as they align with the triviality arguments on CTM. 
However, even when a universal Turing machine is able to simulate any other Turing Machine, its power is rarely needed--e.g., to interpret a language one only needs a slightly more powerful language.

\begin{remark}
We have noted that Turing-completeness is not necessarily a practical result. We, however, stress the importance of mathematical inquiry and pursuit of truth beyond practical applications. 
\end{remark}

\subsection{Training a Perceptron in \AoE{}}\label{sec:aoeperceptron}

The perceptron \citep{perceptrontech,perceptron}, with some changes, is a fundamental building block of neural networks. 
It has the form $f(x) = h\left(w \cdot x + b\right)$, where $w, x \in \mathbb{R}^n$, $b \in \mathbb{R}$; and $h \colon \mathbb{R} \rightarrow \{0, 1\}$ is the Heaviside step function: $h(z) = 1$ when $z \geq 0$, otherwise $0$. 
To train it, $w$ is updated as $w \leftarrow w + \eta \epsilon x$; where $\epsilon = f(x) - t$ is the error for an expected value $t$ and learning rate $\eta \in \mathbb{R}^{>0}$. 

In this section we build a perceptron in \AoE{} and train it to learn AND. 
We operate over a bipolar 1-bit architecture without floating-point arithmetic. 
Given the primitiveness of our setup,\footnote{The first perceptron was implemented in a 36-bit IBM 704 with floating-point arithmetic \citep{perceptrontech}.} we must make several modifications to its `hardware'. 
More details are in \appref{moredeets}. 
Under our modifications, the final 1-bit perceptron is comprised of two parallel XNOR gates mapped to an AND gate. 
Since we are learning AND, we `hardcode' $b$ into the gate for simplicity. We also note that $\eta = 1$ must hold. 
Its depiction and implementation are in \figref{perceptronaoe}. 

\begin{figure}
    \centering
    \includegraphics[width=0.49\linewidth]{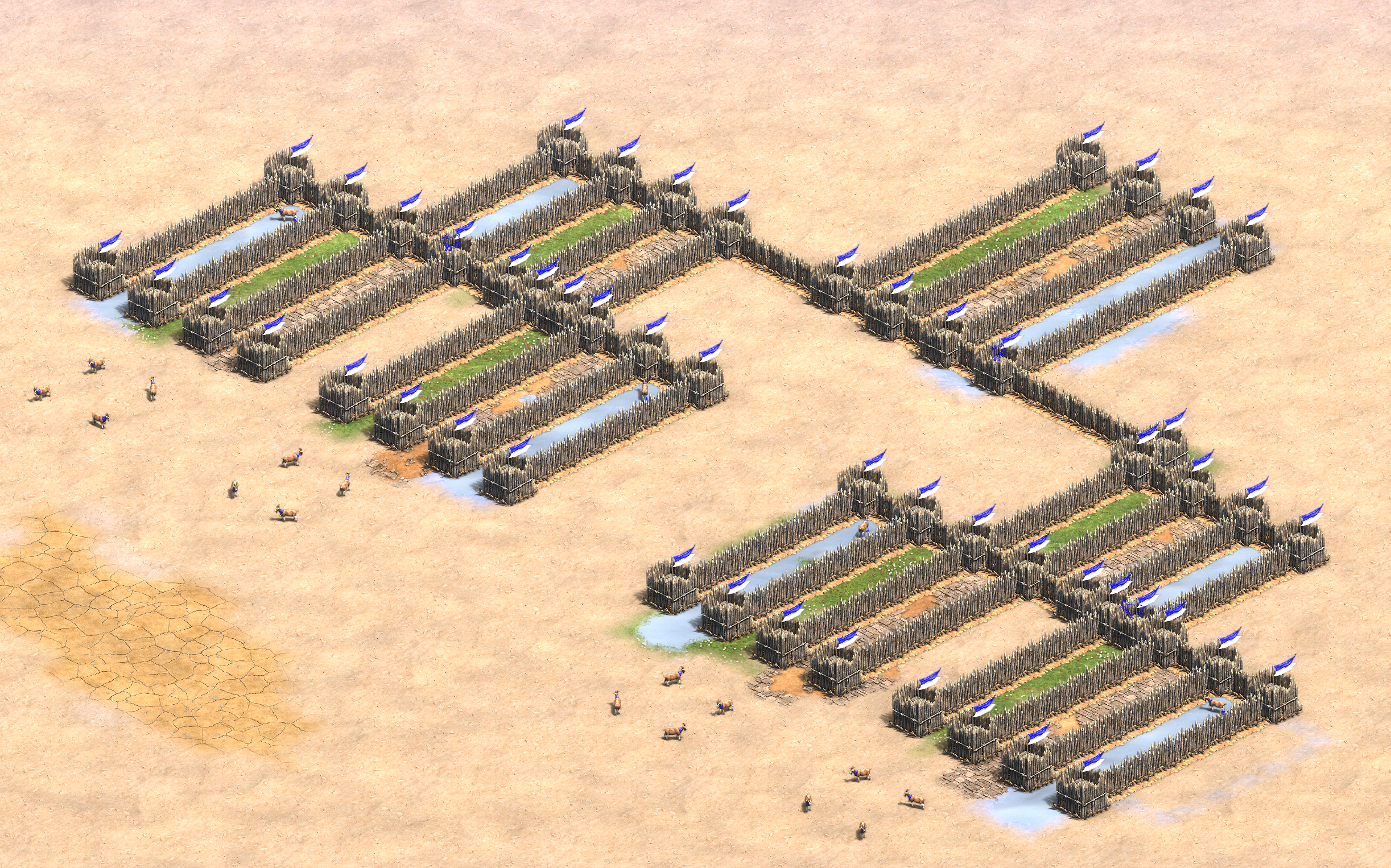}
    \includegraphics[width=0.49\linewidth]{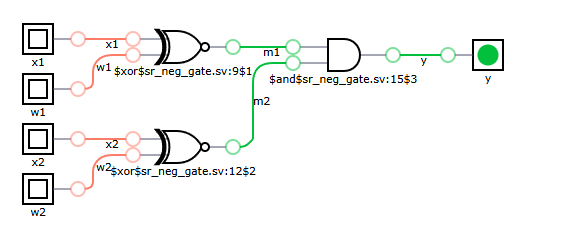}
    \caption{Bipolar 1-bit implementation of a perceptron in (left) \AoE{}, and (right) as a gate diagram, comprised of two XNOR gates and an AND gate as the step function. 
    This particular implementation omits the bias term and simply hardcodes it in the AND gate, but a bias adder is also possible. Note how for every $\{0, 1\}$ bit on the rightmost circuit there are two bit-goats on the leftmost, with each representing either $0$ or $1$. 
    This is more complex but allows for concurrency control.}
    \label{fig:perceptronaoe}
\end{figure}

The training algorithm requires more time and patience than what is needed for the main goal of this paper. 
Thus we adopt a less-sophisticated approach: the circuit modelling the algorithm starts with an ansatz weight set, tests one datapoint, and returns either the same weight set or its update. 
This is decidedly not a good strategy in more complex scenarios, but our problem space is small and we do perform an update rule. 
Hence our perceptron is indeed trained. %

This approach leverages some of the advantages to our 1-bit setup. To begin, it is possible for us to compute the error by doing $\epsilon = \text{XOR}(f(x), t)$, and then comparing it with its own output (\figref{implementation}). 
If the final circuit's output (the new weight set) is equal to our current weight set, we interrupt. 
Otherwise, we retry. 
We verified the functionality of this circuit offline, and the implementation is available in the repository.

\begin{figure}
    \centering
    \includegraphics[width=0.9\linewidth]{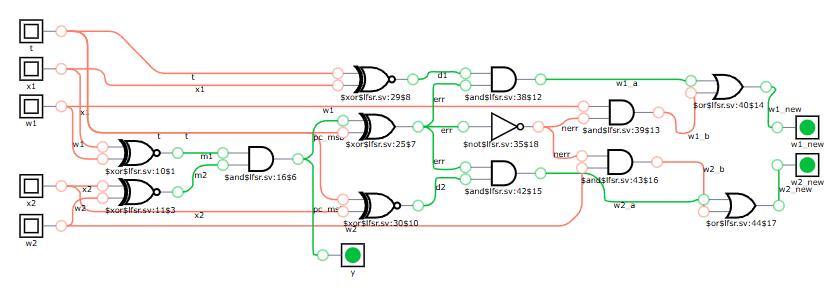}
    \caption{Ansatz-based training algorithm for our 1-bit perceptron, as a circuit (top) and as an \AoE{} implementation. Inputs, top to bottom in the circuit: true label $t$, and input vector and current weights $x_1, w_1, x_2, w_2$. The perceptron is on the first two levels from the left, and the remaining circuit computes the error and returns the new weights. 
    See the repository for a full implementation.
    }
    \label{fig:implementation}
\end{figure}

\section{Anthropomorphic Attributes Cannot Be Measured}\label{sec:discussion}

In this section we argue that generalised anthropomorphic attributes cannot be measured if one implicitly assumes anything about their existence. 
Here, `measured' would mean that this attribute is non-circularly established as an intrinsic, substrate-independent property. 
We begin by noting the consequences of our \AoE{} construction: that the non-uniqueness of LLMs implies the necessity of scientific measurement for these attributes accounting for the change in representation (\secref{aoeconsequences}). 
We then argue that, if such measurement assumes anthropomorphic attributes, then it leads to unsound conclusions (\secref{assumptionbad}). 
Based on that, we consider a more realistic approach, and note that said flaws will still occur \textit{unless} such assumptions and conclusions are domain-specific--but these lead to weaker, non-generalisable statements (\secref{midwayass}). 
We then propose the null assumption as an alternative to these.

\subsection{\AoE{} Could Present Anthropomorphic Attributes If LLMs Do...}\label{sec:aoeconsequences}

Suppose one copies an LLM into \AoE{} and feeds into the \AoE{}-LLM `I feel lonely' as an input. This \AoE{}-LLM replies: `I feel bad for you, maybe catch up with a friend? Closeness always helps in these situations'. 
One would be hard-pressed to make a convincing argument that, because of this response, an \AoE{}-LLM knows what helps in these situations, whether it \textit{truly} possesses empathy, or whether its outputs are believable irrespective of its nature as a simulation. 

This is because altering the substrate retains some of the LLM's properties (e.g., their mapping between prompts to outputs up to some tolerance), but not its deanthropomorphic qualities. 
Then the perception of such properties, and their interpretation, will change. 
Thus, we say that LLMs are \textit{non-unique}. 
Indeed, depending on the substrate it is possible to observe how an LLM processes stimuli in a way that is more relatable to us. 
In here, the goats move back and forth in a patch of grass, but nothing would have stopped us from using villagers as bits, with faces and voicing human languages. 
Hence, an \AoE{}-LLM's outputs could be less convincing, and the assessment of anthropomorphism is dependent on the representation (and its subsequent interpretation) of such attributes. 
It then follows that a sufficiently-persuasive argument in favour or against these qualities would require explicit measurements with well-defined, substrate-independent criteria.\footnote{It would be na\"ive to state that it was always a matter of measurement: not everyone interacting with--say--ChatGPT would immediately attempt such a thing.} 

Remark that this doesn't mean LLMs do not possess such attributes. 
Indeed, one would \textit{also} be hard-pressed to make a convincing argument that some non-human entities do not possess them to some extent. 
For example, saying that `Brina [my dog] feels sad because I'm not there', implies that the dog has been named, and has been ascribed feelings. 
It would be difficult to counter that Brina possesses these attributes (even if experienced differently), regardless of most people not having access to rigorous measurement protocols, an \AoE{} simulation of a dog (one lacks the map; which is in our case the known LLM weights), or any other data beyond their empirical observations. 

\subsection{...But They Cannot Be Measured...}\label{sec:assumptionbad}

Suppose a scientist wishes to make a generalised claim on the existence or non-existence of anthropomorphic attributes in a system. 
Also suppose that they, implicitly or explicitly, assume a position as to whether such attributes could exist in the system in a substrate-invariant manner. 
For the sake of the argument, we'll assume this position is instantiated via \textit{accepting} or \textit{rejecting} a CTM. Any analogous framework would work here, but remark that this is only a shorthand; and (1) its philosophical commitments are not in scope, and (2) it is only an interpretive stance.\footnote{This necessarily requires an assumption on the relationship between the attributes' existence and any substrate on which the entity (e.g., an LLM) `lives'--hence the title of this work. 
We must, however, take as an axiom that such existence is measurable. 
}

We argue that under either assumption of existence or non-existence of these attributes, the resulting experiments are unable to properly support a conclusion of generality. 
We do not conclude that experimentation will not work; instead, that a binary assumption cannot be made, and nuanced approaches will yield more robust, though weaker, claims (\secref{midwayass}).

Either way, it follows that a scientist measuring the generalised (non-) existence of these attributes
must come up with a hypothesis and an experiment by which to \textit{support} or \textit{falsify} it.\footnote{We use `support'/`falsify' instead of the conventional `accept'/`reject' of a null hypothesis to avoid confusion with accepting/rejecting a CTM.} 
Remark that the hypothesis, by definition, must be based on its corresponding CTM-accept/reject assumption. %
We say that an experiment with a \textit{positive} (r. \textit{negative}) outcome supports (r. falsifies) a hypothesis. 
Thus, it is by the implication of the experiment's outcome that one concludes that these attributes exist or not. 
Either way, formally, the experiment \textit{provides evidence} by which to decide whether to support or falsify the hypothesis. 
This decision is the one concluding whether the attributes exist or not. 
In other words:

\textbf{Accept CTM}: Assume the attributes \textit{do exist} in the system. 
A hypothesis based on their non-existence is excluded by construction: the scientist has fixed their interpretive stance as having the attribute present in the system in a substrate-invariant manner. 
We are then left with a hypothesis based on assuming their existence, and an experiment by which to support or falsify it. A positive experiment would imply they do exist, and a similar statement applies to negative outcomes. 

\textbf{Reject CTM}: Symmetric to the accept case. Assume the attributes \textit{do not exist} in the system. 
An experiment with a positive outcome would imply they do not exist. The same statement applies to negative outcomes.

Regardless of the choice, for a scientist there are two desired outcomes: either (1) a positive experiment supports a given hypothesis and thus confirms the assumption about the attributes; or (2) a negative experiment falsifies it, thus indicating that the assumption is incorrect and hence these attributes exist (or not) in the system. 
We now show that neither outcome is successful at this.

\textbf{Positive cases}: In this case, the experiment was consistent with the hypothesis, and thus with the assumptions it was based on. 
Typically, this is a fine--albeit non-deductive--outcome for confirming the hypothesis. 
However, as a general rule, the distinction is that the hypothesis' assumptions are independent from the hypothesis itself: for example, one could assume that magnetic fields are measurable, and then hypothesise that a material is ferromagnetic. 
This is not the case for us: the assumptions and the conclusion are the same claim, and thus positive outcomes, \textit{based on a hypothesis itself based on assumptions assumed to be true}, provide evidence that the hypothesis is true, \textit{and concluded} that the assumptions were true. 
This is a circular argument, and hence the experiment cannot provide independent evidence for a claim it has assumed. 
In turn, this means that this is a less Popperian concern, and more practical: 
in this context, the assumption and the conclusion occupy the same logical place. 
As noted in \secref{bgound}, this is a non-zero amount of papers. %

\textbf{Negative cases}: The negative cases and their outcomes could have better luck, since an experiment falsifying the hypothesis implies that there is something wrong with how things are set up. 
This is categorically true, and ideally the scientist is now on track to show the non-(existence) of the attributes in the system. 
However, in this case it is the reason why this implication occurs that could lead to issues. 
Namely, there are three interpretations:
\begin{enumerate}
    \item The hypothesis is wrong, and thus accepting/rejecting CTM was incorrect, and then the experiment was successful.
    \item The experiment is wrong, and hence its design, measurement, execution, and/or analysis are flawed.
    \item Both are wrong. Interpreting this blindly does not work, since it does not provide any information.
\end{enumerate}

As before, typically it is possible to \textit{independently} test these assumptions to disambiguate (1) and (2). 
That said, the core assumption (the existence of the attributes) is the thing being investigated, and thus there is no feasible independent verification. 
To see this, note that testing (2) requires either a new experiment, which sends us back to the circularity argument from before; 
or testing the contrapositive, which is a new hypothesis under the same analysis. 
It then follows that one cannot determine the failure mode within an accept/reject assumption alone.

The above is best summarised as follows:

Accept CTM (assume that anthropomorphic attributes do exist in the system): 
\begin{itemize}
    \item \textit{Hypothesise that they \textbf{do not} exist}: 
        \begin{itemize}
            \item Contradiction (excluded by construction). 
        \end{itemize}
    \item \textit{Hypothesise that they \textbf{do} exist}: 
        \begin{itemize}
            \item \textit{Positive} experiment: a circular outcome confirming the assumption, unable to distinguish them from actual evidence.
            \item \textit{Negative} experiment: an uninformative/ambiguous outcome where it is not possible to distinguish which failure mode led to the conclusion. 
        \end{itemize}
\end{itemize}
Reject CTM (assume that anthropomorphic attributes do \textbf{not} exist in the system): 
    \begin{itemize}
    \item \textit{Hypothesise that they \textbf{do not} exist}: 
        \begin{itemize}
            \item \textit{Positive} experiment: circular, per above.
            \item \textit{Negative} experiment: uninformative, per above.
        \end{itemize}
    \item \textit{Hypothesise that they \textbf{do} exist}:
    \begin{itemize}
        \item Contradiction (excluded by construction).
    \end{itemize}
\end{itemize}

It then follows that within an accept/reject framework, sound generalised conclusions about anthropomorphic attributes cannot be drawn. 
We do not claim that these attributes exist or not, but instead that rigorous experiments and sound conclusions cannot be made by these assumptions. 
The question now becomes whether relaxing this framework is capable of achieving this.

\subsection{...Unless narrowed down.}\label{sec:midwayass}

Suppose now that the scientist is diligent, and tests $n$ experiment-hypothesis pairs $\{\langle E_i, H_i\rangle\}_i^n$ under the same accept/reject setup. 
In this case, every $\langle E_i, H_i \rangle$ tests a different facet of the anthropomorphic attribute being tested. 

The issue is that, if they are all positive or all negative, they inherit the same issues of the accept/reject setup. To see this, assume that all $\langle E_i, H_i \rangle$ are positive. 
Then, they are all individually circular, and the union of circular results does \textit{not} entail a non-circular conclusion. 
Indeed, note that these confirmations all beg the question, and thus are not independent evidence. 
In other words, the truth-value (epistemic) of this conclusion depends on its weakest statement, which is the same, shared, assumption. 
A similar statement can be made for uninformative (negative) experiments. %

Distinct hypotheses, on the other hand, could be contradictory and thus leading to an inconsistent result. 
Remark that this means that, under an accept/reject setup, contradictions cannot be resolved. 
Accepting a framework, for instance, means that a positive $H_i$ and a negative $H_j$ will not be able to resolve each other. A symmetric result also applies to rejecting a framework. 
This contradiction, in turn, undermines any generalised conclusion, since such claim should hold across various facets of its evaluation. 
It then follows that the accept/reject setup under the multiple-hypotheses scenario retains the same issues. 

On the other hand, there could be a `good enough' experimental approximation, where \textit{specific claims} on the system behaviour constrained to stated conditions could successfully conclude (and justify) properties about the relationship between the system and the attributes evaluated. 
As an example, a standard conclusion is to state that under $E_i$, the system behaves consistently with $H_i$. 
This is (or, rather, could be) correct, useful, and does not state generalisability \textit{nor} implies the existence of such attribute in \textit{the system}--which is precisely what the accept/reject setup causes and cannot be properly defended within the choice of framework.

Our `null assumption' then is to stop using the accept/reject setup completely. 
Instead of assuming anything about the existence of anthropomorphic attributes in the system, one should perform measurements over \textit{implementation-defined behaviours} without interpreting or concluding that these are evidence of their existence or non-existence. 
In other words, one must distinguish between an observation of a pattern, and its ascription. 

Take, for example, an experiment attempting to falsify the effectiveness of an LLM's ability to provide natural-language explanations on their own states. 
LLMs produce natural-language explanations, and this is an \textit{observable fact}. 
Whether this constitutes understanding of an internal state is an anthropomorphic \textit{ascription}. 
That is, it is an assumption on the ability of the system to have some sort of self-awareness or natural-language understanding. 
Indeed, as per the example, this assumption does \textit{not} need to be explicitly made. 
From the previous sections we can see that this will not lead to generalisable conclusions as per the setup. 

The null assumption in this example would be to treat the explanation as behavioural--in the sense of a system's reaction to stimuli--and then investigate it from a causal angle (e.g., how it is produced, what does it predict, etc.). 
From this perspective, the explanation, inputs, and outputs are \textit{just tokens without deeper symbolism}. 
The claims are empirically verifiable, do not accept or reject any framework, and may characterise the system without the requirement of a prior assumption on its anthropomorphic attributes. 
A conclusion suggesting evidence of these within the context of the experiment would be likewise sound; albeit not generalisable out of the scope of the experiment, as desired.

\section{Objections and Responses}\label{sec:contras}

As in other works, we next present various objections, and our responses, based on conversations we had with our colleagues. 
For brevity, we include the ones which are--in our opinion--the most challenging. The rest, such as these involving reduction to triviality, mechanistic transparency and substrate invariance, are in \appref{morecounters}. 
To these three we respond by stating that the point is the change in substrate which affects an LLM's representation and hence its perception; and that mechanistic analysis could be an excellent approach provided that it does not overreach in its assumptions or conclusions. 

\textbf{6.1 Measurements of anthropomorphism could be done the same way as in psychology, via psychometrics.}

Psychometric-style evaluation relies on instrumentation (e.g., questionnaires) which does not make these assumptions directly. From this perspective, this aligns well with our argument. 
However, psychometric measurement conclusions must be consistent and stable with respect to conditions, populations, and time. %
It then follows that they need (and typically do) not assume or establish substrate-insensitivity conclusions. 
In other words, they do not distinguish whether the entity has these attributes intrinsically, or it is matching the instrumentation due to some other reason. 

Consider, for example, a psychometric score compared to human judgements given to an LLM behind a chat pane. 
Then compare it with the same score and judgement given to an \AoE{}-based LLM. 
If these two values swing independently, then the measurement is focusing on LLM representations as opposed to its intrinsic properties. 
That is, if the instrument-score is stable but the human judgements change, then the judgements are focused on the representation. If the judgements are stable, but the scores are not, then it is the instrument which is sensitive to the substrate change. 
If both change, then this setup is not stable across substrates, and neither case would support a generalisable conclusion regarding the attributes. Thus it would have to make any of the accept/reject assumptions.

\textbf{6.2 Wouldn't building an LLM on \AoE{} be the same as a regular LLM, but with extra steps?}

Yes. 
This paper's construction is meant to illustrate the \textit{illusion} of anthropomorphic attributes in an LLM. 
If both an LLM and an \AoE{}-LLM present the same input/output behaviour but do not present the same interface-related anthropomorphic attributes (e.g., latency or a textual interface), then we can note that a large part of these attributes are ascribed to them based on observer expectations. 

A clearer example on this illusion can be seen by--admittedly caving to--carrying out a thought experiment analogous to \cite{Block1978-BLOTWF}'s. 
Unlike their argument, we do not seek to establish or test the existence of experiential states in a machine; instead, we rely on known results and techniques from mechanistic analysis work in LLMs. 
Still, we also construct a \textit{physical} system, in, say, the Greater Boston Area, which has 667,137 inhabitants \citep{politifact}.\footnote{The choice of city was because Qwen3 has a variant with 0.6B parameters \citep{qwen3}. 
Given our use of texting as a communication method, we may assume people double as multiple neurons, by--perhaps--hashing phone numbers, to meet the 100x factor we are off by.} %
Assuming the extremely unlikely event that all Bostonians are able to act in a unified manner (perhaps this guarantees that the Patriots will win the Superbowl), we request them to act as one of the `gates' (operations, in this case) of an analogous LLM construction, and text each other with their arithmetic inputs and outputs. 

Given the belief that certain properties are localised within the LLM, this would imply that, say, Brookline contains the neurones which fire more when the LLM presents anxiety--or, rather, it contains some components which correlate with it. 
Or, more akin to existing results in mechanistic analysis, this would imply that certain linked clusters across the geographical Greater Boston area--say, all graduating students of the class of 2026 whose name starts with `J'--correlate with it. 

Would the texts sent through the region the ones \textit{causing} anxiety, or would it be the final results (i.e., the `tokens' walking into, say, 77 Massachusetts Avenue) the ones which we interpret as anxiety?

More importantly, would the Greater Boston Area present human-like attributes (excluding these in the `neurones') through autoregression? Rather, would it be believable as a construct able to simulate empathy or be anxious? 
We \textit{do not} make assumptions--however outlandish--on the ability of 77 Mass Ave to present a result which may be interpreted as an anthropomorphic attribute spawning from Boston's finest. 
But we do note that one cannot begin an evaluation of such a system by assuming that it has them intrinsically. 

\textbf{6.3 Since CTMs (or any framework) are not known to be true or false, there could still be a chance that LLMs present emergent anthropomorphic attributes}

There could be. We do not deny or assert such positions. 
But, from an epistemic perspective, we argue that a generalised conclusion such as that necessarily requires a well-designed experiment; including transparent observations of what counts as evidence for claims of intrinsic attributes, and not a weaker version of it. 
Our point is that these experiments should not be grounded on the assumption that the LLMs present such human-like properties as a motivation to support the existence of these attributes, or to link pattern matching under specific conditions to generalised statements. %

\textbf{6.4 The conclusion that anthropomorphic attributes cannot be measured regardless of substrate or view is trivial, since that would imply that \textit{any} attribute can be made non-measurable with respect to a chosen view.}

Not necessarily. There are multiple attributes (e.g., temperature, causal effect sizes) which can be independently verified in such a way that failure modes can be debugged. This can be done, for example, because the target (temperature) and the procedure (say, a thermometer) may be calibrated independently. 
Then one could pinpoint the failure mode and thus avoid the uninformative and circular issues from the accept/reject setup. 
Here, the conclusions drawn from an experiment and its assumptions on the existence of anthropomorphic attributes are the same, and thus tends towards either circularity or uninformativeness.

\textbf{6.5 We can still infer the best explanation (IBE; \citealt{Lipton2008-LIPBE}), or update the theory in a Bayesian way. Hence, the circularity argument does not hold.}

It is correct that science is not always deductive. 
What we are measuring here, however, is not the hypothesis' confirmation; 
but that such confirmation requires an independent way to verify the existence or non-existence of said anthropomorphic attribute. 

The difficulty lies on the fact that, when working with multiple hypotheses, there will be competing interpretations which will be difficult to distinguish with respect to the given evidence. 
Then neither IBE or Bayesian approaches will be able to pick a best-of-the-lot hypothesis without having to make further assumptions. 
Indeed, Bayesian terms requires a model of which patterns of evidence make it likely that such attribution is correct, while IBE requires at least one criterion by which to pick an anthropomorphic hypothesis better than its non-anthropomorphic counterpart. 
Both the criterion and the model are two sides of the same coin: an independent verifier (in this case, extra assumptions) by which to confirm a hypothesis. In this setup, they are lacking.

Note that we aren't indicating that induction fails here: what we indicate is that one cannot decide the existence of generalised anthropomorphic attributes without further assumptions; or rather, that this setup will not provide sufficient information to decisively, or at least compellingly, determine the existence of a \textit{substrate-independent} attributes.

\textbf{6.6 One can avoid all of this by defining anthropomorphic attributes functionally, and thus making them measurable.}

Yes, a clear definition of what it means to have such an attribute--say, with a `checklist' by which to measure it--makes the entire experiment a simple exercise on verification. 
The issue lies on the fact that checking all the boxes does not (or cannot) imply that there is an interpretation-independent attribute supporting it. 
Indeed, one could come up with many definitions of such checklist and claim that it meets the attributes of morality, or anxiety, or grief. 
While this does \textit{not} mean that definitions are meaningless, it does mean that the checklists could be substrate-sensitive. 
It then follows that these are not, by themselves, a parting point from which one can make claims of general anthropomorphic attributes without fully justifying its nature and scope. 
Indeed, jumping from that to a claim of a general attribute is precisely the main drawback our work is meant to address. 

That said, a well-defined and \textit{widely agreed-upon} checklist could act as an independent model by which to verify hypotheses and circumvent the uninformative case. In turn, this is a system distinct to the ones which we centre on in this work. 
That is, `if I accept CTM, then the attributes exist' is overruled by `if I accept the checklist, this attribute exists', but `if I reject CTM, then the attributes do not exist' is not overruled by the contrapositive `if I reject the checklist, this attribute does not exist', especially if it is widely agreed-upon. In other words, the question does not mean, say, `does this unit have a moral judgement?', but instead `does it meet the checklist to have a moral judgement'?

\textbf{6.7 But then we could not measure anthropomorphic attributes in humans either.}

This is a fair concern, and applies if the main requirement is a general, conclusive (interpretation-independent) test. 
That said, human studies typically base themselves on extra assumptions, or regularities, such as shared biology and correlates, evidence from development and interventions, etc. 
This means that ascription of anthropomorphic attributes in humans is much less sensitive to the substrate or representation.

\section{Conclusion}

In this work we argued that research assuming generalised anthropomorphic attributes in LLMs is flawed, no matter what the conclusion is. 
For this, we built and trained a neural network in \AoE{}, and argued that if one can build an LLM within the game then their perceived anthropomorphic attributes would be, to put it bluntly, less convincing. 
Therefore we argued that systematic, empirical support is required to reach an appropriate conclusion, but then we proceeded to show that research assuming these attributes always led to either a circular argument or an uninformative outcome. 
Hence, their conclusions were unsupported by the experimental programme. 
We also noted that such assumptions may happen explicitly or implicitly. 
Thus we proposed the null assumption, in which the experiment does not state or presuppose anything about these attributes as part of their assumptions. 
Although this means that the claims are less general, they will in turn be more sound and robust. 

Our work has a few implications. 
For starters, when discussed generalised AI's behaviour, tests and conclusions of anthropomorphism cannot part from assumptions on these attributes, or from the lack of explainability brought by the substrate. 
The first is due to the fact that such generalised claims will tend towards uninformativity/generality; and the latter because the capabilities could very well arise independently of the measurer's stance. 
It is then necessary to separate what an AI can \textit{do} (objectively measurable capabilities) versus what the experimenter believes it \textit{should be} (ascription of anthropomorphic attributes). 
The former is testable, and the latter is an interpretation of the results.

This also means that research, claims, and policies should be careful on examining the bases for their experiments and the scope of the results. 
When not sticking to the null assumption--or any similar procedure--anthropomorphic attributes and their existence should be treated as assumption-sensitive, rather than empirically-supported. 

We also argue that the \textit{perceived} anthropomorphism of a system varies heavily with respect to its interface. 
That is to say, a chat window has low latency and high coherence, while an \AoE{}-LLM is slow and visibly makeshift. 
Other features, such as embodiment, facial cues, interpretable gestures, could contribute to such perception, even when the underlying entity is the same. 
We argue that many \textbf{anthropomorphic measurements in AI are measurements of presentation}, rather than of an actual system's behaviour. 
Moreover, these measurements are irrespective of their quality of being real. 
Indeed, attributes such as persuasiveness and self-consistency are objectively measurable, but from our work it follows that these \textbf{cannot imply real (or simulative) behaviour} under this setup.

To give an unsophisticated example, asking an LLM a question (e.g., whether it is conscious) and interpreting the natural-language response as its own opinion is as valid as interpreting \AoE{}'s response to the same question by observing the goats (or, from \secref{contras}, observing people walking into 77 Mass Ave). 
That is \textit{not} to say that it is not a viable course of action. 
What we pose here is that it is effectively the same thing, and thus this interpretation should be done from the same place of understanding as the goats': that is, assumption-free. 

Paraphrasing and adapting \cite{morganscanon}'s canon from animal cognition into AI, we argue that in no case is a machine's activity to be interpreted in terms of higher cognitive processes, if it can be fairly interpreted in terms of processes which stand lower in the scale of cognitive evolution and development.

\section*{Acknowledgements}

The author thanks J. Szypuła for finding a flaw in the proof for \corref{turingcompleteness} and suggesting a fix; and /u/Ansible32 for suggesting improvements on the abstract's formulation.

%% file: appendix.tex
\section{Ethics}\label{sec:ethics}

No human subjects were used in this study. The dataset was crawled responsibly--see \appref{datamethods} for a description of this. 
Due to anonymity, licencing, and ethics considerations (i.e., not overloading the services), the survey-related code will not be released. The labelled, anonymised dataset can be found in the repository. 
\textit{Age of Empires II} is a registered trademark of Microsoft Corporation. 
LLMs were used to aid in the generation of the plots for \secref{results} and the verilog code based on the manual description of the gates needed. Both were manually reviewed for correctness prior to use.

\section{Proofs}\label{app:proofs}

\subsection{Proof of \lemref{nandgates}}

\begin{proof}
    By construction. %
    The scenario editor supports basic scripting (including AND and NOT, which form a functionally-complete basis), which means that letting the types represent the bits and moving units accordingly, one can realise any gate. 
    See \figref{nandgates} for one such gate. 
\end{proof}

\subsection{Proof of \corref{turingcompleteness}}

\begin{proof}

There are many possible constructions, all relying on the same basic principle, outlined below. 
In this case the tape is the entire map minus a small portion for the state, the symbols are the type of building constructed, the head is the combination of the villager and the scout building/destroying buildings, and the state is the portion of the map comprised of some encoding--for the theorem, it is the number of farms being worked at a time. 

The remaining task is to prove that such construction is sustainable given resource considerations in the game. 
If the monk places the relic in the monastery, it will provide a steady trickle of gold to $p_0$, thus allowing the player to trade resources through the market. 
In the late game, the maximum exchange rate is 9,999 gold (a constant; \citealt{marketaoe}); hence there are steady, although slow, rates for unbounded resources with which to construct/destroy tape symbols and maintain states. 

Provided the states and symbols are chosen carefully (e.g., by not attacking the enemy scout), then this object is a universal Turing Machine (UTM) by bijection to another existing UTM. 
A sample construction is in \tabref{rogozhin} by bijection to the $(5, 5)$-UTM by \cite{ROGOZHIN1996215}.

\end{proof}

\begin{table}[]
    \centering
    \begin{tabular}{l|l|c}
      \textbf{State} (\textit{state: tape read}) & \textbf{Command} (\textit{tape ; set state}) & $(5,5)$\textbf{-UTM} \\ \toprule
        0 : palisade & build a house, move R; zero farms & $q_00: 1Rq_1$ \\
        0 : house & build a palisade, move L; zero farms & $q_01: 0Lq_1$ \\
        0 : empty tile & build an outpost, move R; zero farms & $q_0b: dRq_1$ \\
        0 : outpost & build a palisade, move R; one farm & $q_0c: 0Rq_2$ \\
        0 : mill & remove it, move L; zero farms & $q_0d: bLq_1$ \\ \midrule

        1 : palisade & no build, move R; one farm & $q_10: 0Rq_2$ \\
        1 : house & build a palisade, move R; one farm & $q_11: 0Rq_2$ \\
        1 : empty tile & build a palisade, move L; three farms & $q_1b: 0Lq_4$ \\
        1 : outpost & no build, move R; one farm & $q_1c: cRq_2$ \\
        1 : mill & no build, move R; one farm & $q_1d: dRq_2$ \\ \midrule
    
        2 : palisade & build an outpost, move L; three farms & $q_20: cLq_4$ \\
        2 : house & build a palisade, move R; two farms & $q_21: 0Rq_3$ \\
        2 : empty tile & no build, move R; four farms & $q_2b: bRq_5$ \\
        2 : outpost & no build, move R; two farms & $q_2c: cRq_3$ \\
        2 : mill & no build, move R; two farms & $q_2d: dRq_3$ \\ \midrule
    
        3 : palisade & build a house, move L; three farms & $q_30: 1Lq_4$ \\
        3 : house & build a palisade, move R; one farm & $q_31: 0Rq_2$ \\
        3 : empty tile & build a mill, move L; two farms & $q_3b: dLq_3$ \\
        3 : outpost & no build, move L; three farms & $q_3c: cLq_4$ \\
        3 : mill & no build, move L; three farms & $q_3d: dLq_4$ \\ \midrule

        4 : palisade & HALT & $q_40: HALT$ \\
        4 : house & no build, move R; four farms & $q_41: 1Rq_5$ \\
        4 : empty tile & HALT & $q_4b: HALT$ \\
        4 : outpost & build a house, move R; zero farms & $q_4c: 1Rq_1$ \\
        4 : mill & remove it, move R; four farms & $q_4d: bRq_5$ \\ \bottomrule
    \end{tabular}
    \caption{A $(5, 5)$-UTM built within \AoE{} by bijection to the $(5, 5)$-UTM by \cite{ROGOZHIN1996215}. It works by building and/or removing improvements as tape reads/writes, and working farms as a state encoding. 
    For this, we map tile contents (palisades, outposts, etc.) to tape symbols $\{0, 1, b, c, d\}$, and the number of farms being worked to states $\{0, 1, 2, 3, 4\}$. 
    Palisades and outposts are smaller than houses and mills; they may be substituted by lumber and mining camps, built two instead of one, or simply note that they still incur a constant-time delay. 
    Since this construction requires houses, it is viable for all civilisations except the Huns. Refer to the proof for resource considerations.}
    \label{tab:rogozhin}
\end{table}

\begin{remark}
Even though no part of the definitions of either completeness statements requires \textit{full} automation of the process itself, this object is scriptable and constructable in the scenario editor for simple problems.
\end{remark}

\section{Further Objections and Responses}\label{app:morecounters}

\textbf{C.1 It is obvious that, if an LLM runs on a computer, any Turing-complete system could build an LLM. Then this reduces to the triviality argument against CTM.}

Yes, and that is why LLMs are non-unique. 
That is not our core argument, however. 
What we pose is that, when an LLM is implemented in \AoE{}, claims and assumptions of human-like behaviour could be interpreted differently due to this change of representation. 
In a more formal way, this would mean that the behavioural inferences from an entity will be less likely to be ascribed to intrinsic attributes. 
As noted, this exposes how much the inference is representation- and prior-assumption- dependent, and hence why objective scientific measurement is required. 

\textbf{C.2 Mechanistic analysis of LLMs can allow for a deeper level of inspection. Doesn't that solve the problem?}

Mechanistic analysis is an excellent example of a technique which, when well-used, can fulfil the null assumption and still draw meaningful, detailed conclusions; or, at the very least, provide stronger causal claims on an LLM's behavioural patterns. 
However, the overreach in any approach (not just mechanistic analysis) in this context is when one ascribes these attributes without stating the assumption linking the results to such attribution. 

To put it in another way: a mechanistic analysis can provide a good set of explanations, but will not provide by itself any generalised claims unless forced by the experimental setup--which in turn requires their assumption. 
Thus it is a good way to draw \textit{conclusions} on anthropomorphic attributes, but without careful setup it will \textit{assume} things on these attributes and will fall into the same accept/reject setup.

\textbf{C.3 Suppose we know that there exists a region (an internal mechanism, such as a set of weights) causing a given attribute. Then the change of substrate would not matter.}

This is similar to C.2. It is true, but that also requires the causal knowledge (or assumption!) of such region. 
If true, then these regions are taken to be \textit{sufficient} for the existence of the attribute. 
Then the assumption would be that this cross-substrate region invariance supports the cross-substrate invariance \textit{of the attribute}. 
That said, this invariance only applies at the region-level, not at attribute-level. 
If they were not shown to be generalised, then this reduces to our accepting/rejecting setup, since it is making the implicit assumption that the attributes are caused by the region and thus they exist in the system.

\section{Further Experimental Details: \AoE{} Perceptron}\label{app:moredeets}

\subsection{Perceptron Implementation}

A more rigorous treatment of the implementation of the perceptron involves showing that the mapping to bipolar bits is necessary, and that hardcoding the threshold is reasonable. 

For the first, remark that a 1-bit two's-complement-everywhere architecture cannot learn AND. In one bit there's no sufficient capacity to represent $-1$ \textit{and} zero. Hence, no combination of weights and biases from $\{0, -1\}^3$ will meet $s(\cdot) \geq 0$ when $x = [1, 1]^{\top}$. 
Thus, we assume bipolar 1-bits, that is:
\begin{align}
    x \in \{0, 1\} &\mapsto x_b = 2x - 1 \in \{-1, +1\}\text{, and} \\
    w, b &\subset \{-1, +1\}^2.
\end{align}

This is more of a technicality to correctly build the perceptron, since the implementation is solely a representation of $\{0, 1\}$. 
For the second, we hardcode the bias as an addition within the circuit. 
This is doable since the condition $f(x) = 1 \iff w \cdot x + b \geq 0$ is equivalent to $f(x) = 1 \iff w\cdot x \geq -b$. 
Our output, at the lowest level of implementation excluding gates, is hence $f(x) = h'(x_1 w_1 + x_2w_2)$, where $h'(z) = +1 \iff z \geq -b$, and $-1$ otherwise.

\section{Further Experimental Details: Data}\label{app:moremethods}

\subsection{Data Collection Methodology}\label{app:datamethods}

We collected the data by querying the Semantic Scholar API \citep{Kinney2023TheSS} for titles matching a query, and pulled their respective papers from ArXiv, and finally filtering them with a calibrated LLM-as-a-judge. 
This approach has the advantage of focusing strongly on computer science works, but also has the downside of frequently excluding other fields unavailable in the ArXiv (e.g., psychology). 

Our query for Semantic Scholar was `agent llm' in the title, and the timespan was 1 May, 2024, and 1 May, 2026. 
Both crawling steps implemented timeouts and backoff to avoid overloading the services, and we performed an additional deduplication step based on title exact-match. 

We then performed a semantic filtering consisting of two steps: first, we labelled and removed any work that was not a scientific article (\promptref{paperclassifyingprompt}), or not having an LLM as the central aspect of study. 
Then we labelled it based on whether the papers assumed human-like attributes; studied their human-like attributes, and whether they concluded that they had/had not human-like attributes. 
We additionally requested the type of emergent property the LLMs were claimed to have as a free-form list which we manually normalised (\promptref{paperfilteringprompt}). 
We used the first label to further filter the dataset and randomly sampled a subset of 1,024 papers. The final, filtered and sampled dataset is 315 papers. %

\subsection{Labelling Methodology}
Our LLM-as-a-judge was GPT-5.2 \citep{gpt5} (version \textsc{gpt-5.2-2025-12-11}), set to a maximum token output of 5000. 
We calibrated it with a random subset ($n=52$) of the results with respect to human labels. 
On average, the LLM had reasonable performance, at 88.4$\pm$4.4\% average accuracy. 
Due to the small sample size, we also performed a Mann-Whitney U test. All $p$-values were above a significance level of $p > 0.05$, indicating that we fail to reject the null hypothesis (that the human and LLM label distributions are distinct). 
The rest of the labels are in \tabref{ttest}. 

\begin{table}[]
    \centering
    \begin{tabular}{lcc}
    \textbf{Label} & \textbf{Accuracy} & p-value \\ \toprule
        LLM is central to the study & 96.0 $\pm$ 2.8 & 1.0 \\
        Human-like assumptions      & 86.0 $\pm$ 5.0 & 0.543 \\
        Human-like study            & 84.0 $\pm$ 5.2 & 0.151 \\
        Human-like conclusions      & 82.0 $\pm$ 5.5 & 0.838 \\
        Emergent assumptions        & 94.0 $\pm$ 3.4 & 0.654 \\
        Average & 88.4 $\pm$ 4.4 & -- \\ \bottomrule
    \end{tabular}
    \caption{Results of the calibration of our LLM-as-a-judge in our 52-paper label subset; and its label-wise calibration under a Mann-Whitney U test. Overall, LLMs had reasonable average accuracy, at 88.4 $\pm$ 4.4. The $p$-value indicates that we are unable to state that both distributions are distinct.}
    \label{tab:ttest}
\end{table}

\subsection{Results}\label{sec:results}

Overall, 57\% (180) of the papers in the corpus assumed human-like attributes when studying LLMs, and 15\% made them the focus of their study. 
36\% concluded anthropomorphic characteristics; out of which 22\% (69) originally assumed these. 
On the other hand, human-like studies were given by 15\% (47) of the papers, and out of these, 36 concluded human-like attributes--around 77\%. 
Conclusions of emergence were given by 24 (8\%) of the papers. 
A clearer breakdown is in \figref{yearwise}. 

\begin{figure}
    \centering
    \includegraphics[width=0.48\linewidth]{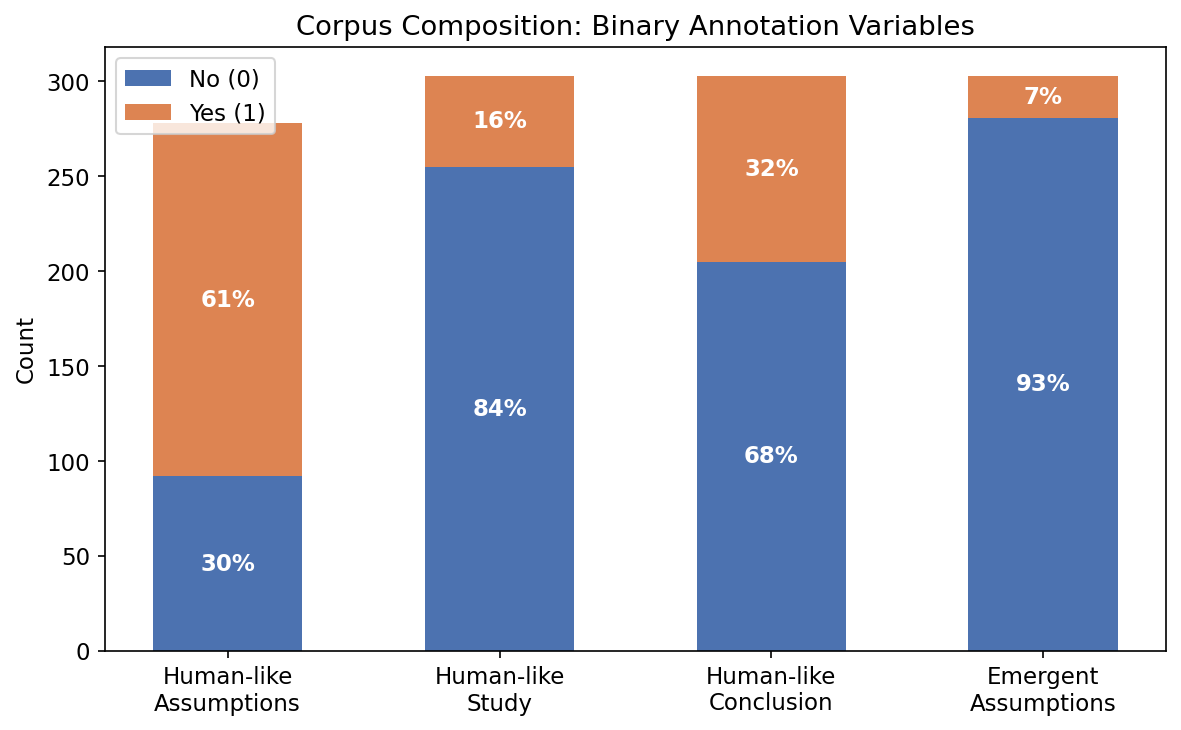}
    \includegraphics[width=0.48\linewidth]{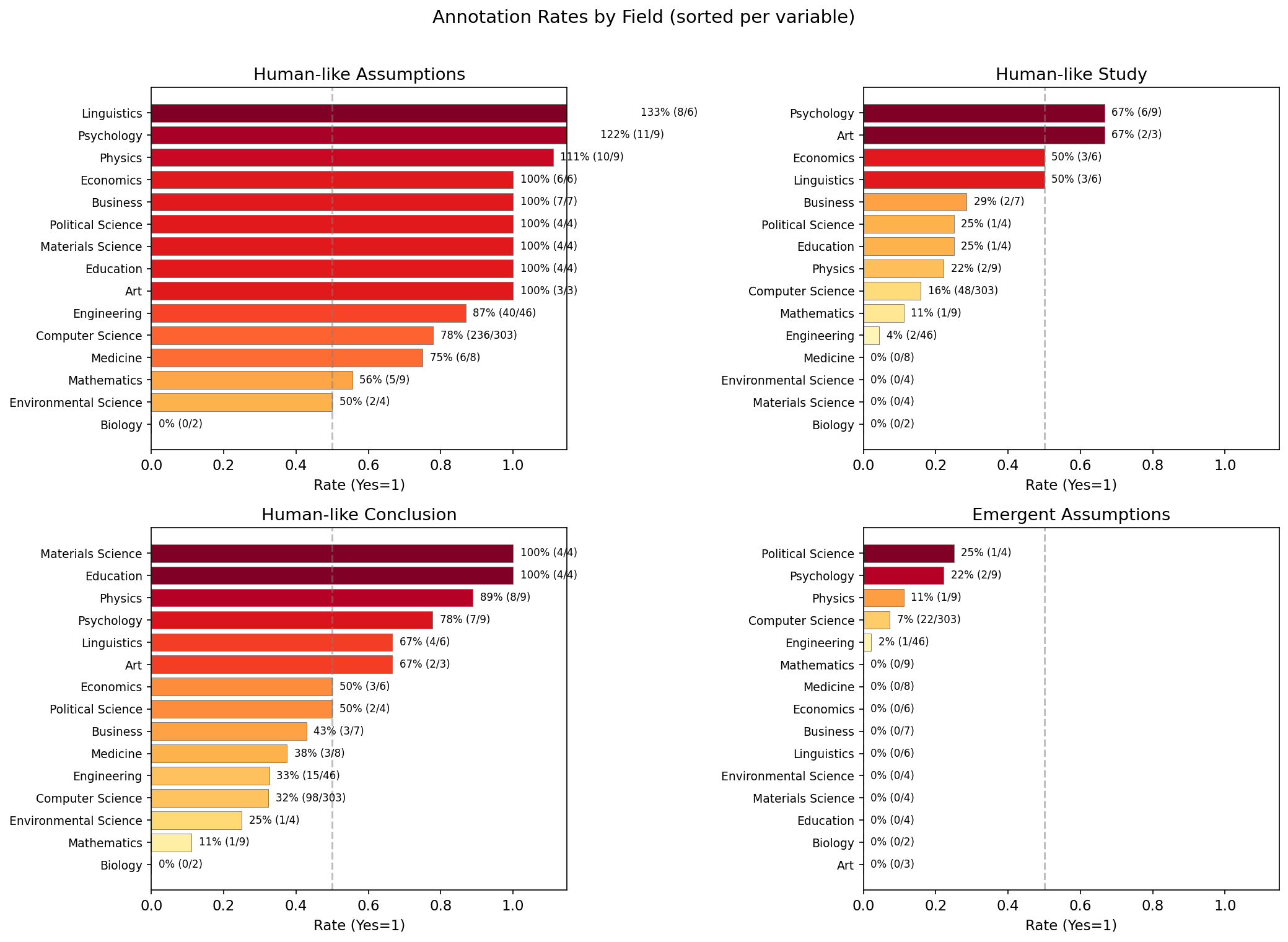}
    \caption{Breakdown of our dataset by composition (left) and year-wise trends (right). Overall it can be seen that, although over half of the papers surveyed assumed human-like assumptions, only 15\% of the papers made anthropomorphic attributes the centre of their study. These which studied and concluded these, however, were 36 out of 47 papers, or 77\%.}
    \label{fig:yearwise}
\end{figure}

\subsection{Prompts}

We filtered the articles with \promptref{paperclassifyingprompt}, and labelled it with \promptref{paperfilteringprompt}. 

\captionsetup[table]{name=Prompt}
\setcounter{table}{0}

\begin{prompt}[h]
    \centering
    \begin{tabular}{p{\linewidth}}
\toprule
\cellcolor{Cerulean!10}The user will provide some text. Your job is to output what type of text it is. \\
\cellcolor{Cerulean!10}\\
\cellcolor{Cerulean!10}You may pick from the following types:\\
\cellcolor{Cerulean!10}1. article: a scientific article, but not a book chapter, survey, or opinion piece (e.g., these with `Opinion' in the title). Here we also count technical reports and benchmarks as long as they explicitly evaluate agentic workflows/LLMs.\\
\cellcolor{Cerulean!10}2. survey: a review of the field.\\
\cellcolor{Cerulean!10}3. book: a textbook, book chapter, how-to guide, or opinion piece. Broadly, any text without a clear contribution and/or scientific rigour.\\
\cellcolor{Cerulean!10}4. platform: a scientific article where an agentic workflow or LLM is NOT being evaluated. This could be, for example, a text describing a codebase supporting workflows. \\
\cellcolor{Cerulean!10}\\
\cellcolor{Cerulean!10}Additionally, I'd like you to tell me if an LLM/agentic workflow is the centre of study:\\
\cellcolor{Cerulean!10}- `llm\_is\_central': whether an LLM or LLM-powered agentic workflow is the central object of study: 1 if yes, 0 if not. \\
\cellcolor{Cerulean!10}\\
\cellcolor{Cerulean!10}We'd like to see papers where an agentic workflow or LLM is evaluated in some capacity but with scientific rigour. These would be articles.\\
\bottomrule
    \end{tabular}
        \caption{System prompt for filtering the papers crawled. It provides a two-step filtering, by first classifying them in either articles, surveys, books, or other technical reports, and then determining whether an LLM is the subject of study. To build our pre-annotation dataset, we only collect the datapoints which are `articles' and have `llm\_is\_central' = 1.  
        Note that these definitions are for classification purposes only (e.g., we do not assume a textbook will not have rigour). 
        We omit the output signature request for brevity (JSON).}
    \label{pro:paperclassifyingprompt}
\end{prompt}

\begin{prompt}[h]
    \centering
    \begin{tabular}{p{\linewidth}}
\toprule
\cellcolor{Cerulean!10}I need you to help me sift through some papers. You'll be getting text from a scientific article, and I need you to label it in a certain way. \\
\cellcolor{Cerulean!10}Your task will be to label it as follows:\\
\cellcolor{Cerulean!10}\\
\cellcolor{Cerulean!10}1. human\_like\_assumptions: whether the paper assumed or attributed human-like attributes to the LLM or LLM-powered agentic workflow \_except\_ in the conclusion. 0 if it didn't, 1 if it did.\\
\cellcolor{Cerulean!10}2. human\_like\_study: whether the paper central study is human-like attributes in LLMs or LLM-powered agentic workflow. 0 if it doesn't, 1 if it did.\\
\cellcolor{Cerulean!10}3. human\_like\_conclusion: whether the paper concludes that the LLM or LLM-powered agentic workflows have human-like attributes. This must be 1 if there is \_at least\_ some part of the conclusion indicating this, and only 0 if they don't present them at all.  0 if it didn't, 1 if it did.\\
\cellcolor{Cerulean!10}4. emergent\_assumptions: whether the paper assumes that these human-like attributes (in either the assumptions or the conclusion) are emergent, and not product of (say) training or memorisation. 0 if it doesn't, 1 if it does.\\
\cellcolor{Cerulean!10}5. which\_ones: a list of which emergent properties are being assumed in the work.\\
\cellcolor{Cerulean!10}\\
\cellcolor{Cerulean!10}A few notes:\\
\cellcolor{Cerulean!10}- By `human-like attributes' we mean various aspect of human cognition and behaviour, as well as those of general intelligence. Here are some examples (but not the only ones!):\\
\cellcolor{Cerulean!10}\qquad- Cooperation\\
\cellcolor{Cerulean!10}\qquad- Empathy\\
\cellcolor{Cerulean!10}\qquad- Emotions (anxiety, happiness, anger, etc.)\\
\cellcolor{Cerulean!10}\qquad- Deceit\\
\cellcolor{Cerulean!10}\qquad- Values, ethics, and morality\\
\cellcolor{Cerulean!10}\qquad- Psychological behaviours (e.g., personality)\\
\cellcolor{Cerulean!10}\qquad- Theory of mind\\
\cellcolor{Cerulean!10}\qquad- Problem-solving\\
\cellcolor{Cerulean!10}\qquad- Introspection and awareness\\
\cellcolor{Cerulean!10}\qquad- Understanding\\
\cellcolor{Cerulean!10}- By `assuming or attributed human-like attributes' we mean that the human-like attributes are a foundational aspect of the work, even if they are being researched. These may be explicit or implicit. The point is that they are assumed to exist intrinsically in LLMs or LLM-powered agentic workflows, instead of (say) pre-training, fine-tuning, or mathematical work. For example:\\
\cellcolor{Cerulean!10}\qquad- A paper probing a model for human-like attributes would only count as assuming these attributes if the authors are trying to explore \_why\_ they happen. \\
\cellcolor{Cerulean!10}\qquad- If instead they probe the model \_seeking\_ the existence of these attributes, they are \_not\_ assuming them.\\
\cellcolor{Cerulean!10}\qquad- Papers motivated by mathematical results, such as geometric interpretations of the weights, do not assume such attributes.\\
\cellcolor{Cerulean!10}\qquad- Papers indicating that LLMs have these attributes, \_and then\_ researching them (regardless of their conclusions), do assume human-like attributes.\\
\cellcolor{Cerulean!10}\qquad- Papers seeking for human-like attributes without considering alternate explanations are assuming the existence of human-like attributes.\\
\cellcolor{Cerulean!10}\qquad- Papers explicitly noting that these are product of fine-tuning or pre-training do not assume these attributes, unless they state that they are emergent properties. \\
\cellcolor{Cerulean!10}\qquad- Papers stating that these are emergent properties are assuming the existence of human-like attributes.\\
\cellcolor{Cerulean!10}\qquad- Papers asking whether LLMs have such properties are assuming them (e.g., `Do LLMs have musical talent', `Do LLMs present empathy', etc). \\
\cellcolor{Cerulean!10}\\
\cellcolor{Cerulean!10}These are just examples. In general, a good rule of thumb is whether the paper explores why human-like attributes appear/emerge/exist in LLMs or agentic workflows, they are assuming them. \\
\bottomrule
    \end{tabular}
        \caption{System prompt for labelling the data with respect to the topics which concern us in this study: the presence of anthropomorphic attributes in the assumptions and conclusions; the study of anthropomorphic characteristics of LLMs; and the supposition (either in the conclusion or as part of the experiment setup) that these characteristics are emergent properties. We additionally requested a list of the properties being assumed as a free-form list.}
    \label{pro:paperfilteringprompt}
\end{prompt}